\documentclass[11pt]{article}
\PassOptionsToPackage{hidelinks}{hyperref}

\usepackage[final]{acl}

\usepackage{times}
\usepackage{latexsym}
\usepackage{amsmath}
\usepackage{amssymb}
\usepackage{amsthm}
\usepackage{booktabs}
\usepackage{graphicx}
\usepackage{subcaption}
\usepackage[hidelinks]{hyperref}
\hypersetup{
    pdftitle={HypEHR: Hyperbolic Modeling of Electronic Health Records for Efficient Question Answering},
    pdfauthor={Yuyu Liu, Sarang Rajendra Patil, Mengjia Xu, Tengfei Ma}
}
\usepackage{xurl}
\usepackage{tabularx}
\usepackage{xcolor}

\theoremstyle{plain}
\newtheorem{proposition}{Proposition}

\theoremstyle{remark}

\newcommand{\pms}[1]{{\scriptsize$\pm$\,#1}}

\usepackage[T1]{fontenc}
\usepackage[utf8]{inputenc}
\usepackage{needspace}
\usepackage{microtype}

\usepackage{inconsolata}

\usepackage{graphicx}
\graphicspath{{../}}

\usepackage{amsmath}

\title{HypEHR: Hyperbolic Modeling of Electronic Health Records for Efficient Question Answering\thanks{Accepted at Findings of ACL 2026}}

\author{
  Yuyu Liu\textsuperscript{1} \quad
  Sarang Rajendra Patil\textsuperscript{2} \quad
  Mengjia Xu\textsuperscript{2} \quad
  Tengfei Ma\textsuperscript{3} \\
  \textsuperscript{1}Department of Computer Science, Stony Brook University \\
  \textsuperscript{2}Department of Data Science, New Jersey Institute of Technology \\
  \textsuperscript{3}Department of Biomedical Informatics, Stony Brook University \\
  \texttt{\{yuyu.liu, tengfei.ma\}@stonybrook.edu} \\
  \texttt{\{sp3463, mx6\}@njit.edu}
}

\begin{document}
\maketitle
\begin{abstract}
Electronic health record (EHR) question answering is often handled by LLM-based pipelines that are costly to deploy and do not explicitly leverage the hierarchical structure of clinical data. Motivated by evidence that medical ontologies and patient trajectories exhibit hyperbolic geometry, we propose HypEHR, a compact Lorentzian model that embeds codes, visits, and questions in hyperbolic space and answers queries via geometry-consistent cross-attention with type-specific pointer heads. HypEHR is pretrained with next-visit diagnosis prediction and hierarchy-aware regularization to align representations with the ICD ontology. On two MIMIC-IV-based EHR-QA benchmarks, HypEHR approaches LLM-based methods while using far fewer parameters. Our code is publicly available at \url{https://github.com/yuyuliu11037/HypEHR}.
\end{abstract}
\section{Introduction}
%TODO: Add recent works related to EHR representation learning methods (before sit at two extreme)

Electronic health record (EHR) question answering (EHR-QA) aims to answer natural-language clinical questions over a patient’s longitudinal record~\cite{bardhan_question_2023}. For example, ``has patient been admitted to the emergency room on the first hospital visit'' or ``is there any microbiological test result on the current hospital visit for patient's blood culture?''~\cite{bae_ehrxqa_2023}. Recent datasets over MIMIC-III/IV have driven progress, but most methods sit at three extremes: (i) EHR representation learning methods, including sequential and graph-based models that encode temporal and heterogeneous clinical structures~\cite{miotto_deep_2016,li_behrt_2020,landi_deep_2020,chen_predictive_2024}, (ii) text-to-SQL or graph semantic parsers~\cite{wang_text--sql_2020,lee_ehrsql_2023,raghavan_emrkbqa_2021,bardhan_drugehrqa_2022}, and (iii) retrieval-augmented pipelines built on large language models (LLMs) such as GPT-3.5/4~\cite{kweon_ehrnoteqa_2024,elgedawy_dynamic_2024,wu_instruction_2024}. These approaches can be accurate, yet they are computationally heavy, hard to deploy under strict privacy constraints, and largely ignore the strong structural priors present in EHR data.

Prior research in EHR representation learning indicates that medical codes and longitudinal patient trajectories are intrinsically hierarchical, exhibiting properties that align closely with hyperbolic geometry~\cite{lu_learning_2019, beaulieu-jones_learning_2019}. While Euclidean embeddings distort tree-like structures, hyperbolic spaces can embed hierarchies with arbitrarily low distortion. Building on this insight, ~\citet{lu_self-supervised_2023} demonstrate that hyperbolic embeddings of the medical code hierarchy can improve temporal health event prediction, but their resulting patient representations are ultimately modeled in a Euclidean space. This raises a central research question: can a compact model, explicitly aligned with the intrinsic geometry of EHRs at the patient level, compete with billion-parameter LLMs in complex question answering?

\begin{figure}[t]
  \centering
  \includegraphics[width=\linewidth]{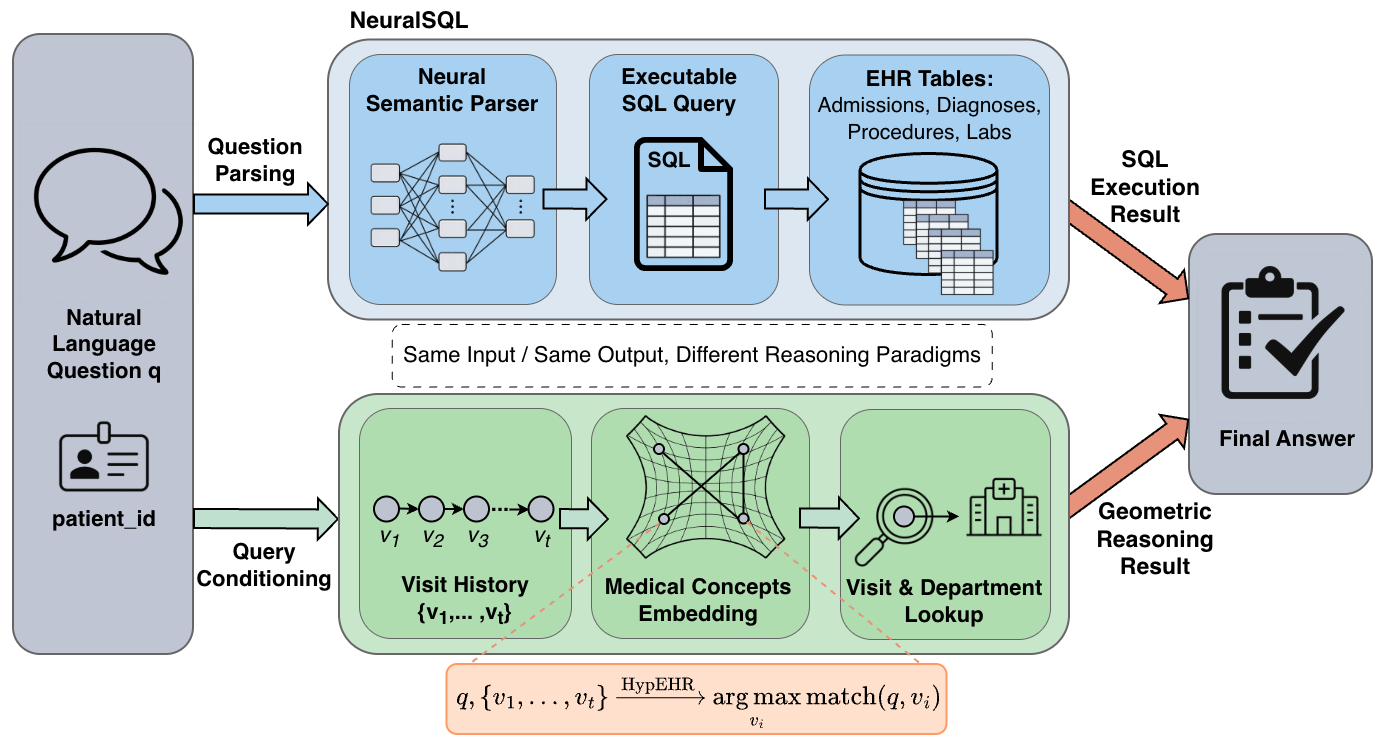}
  \caption{Comparison of workflows between text-to-SQL (\textit{top}) and our method (\textit{bottom}). The text-to-SQL-based methods typically rely on large-scale pretrained large language models to generate accurate SQL queries, whereas our method is specifically adapted to medical data, enabling comparable performance with a significantly smaller number of parameters.}
  \label{fig:compare}
  \vspace{-.2in}
\end{figure}

We address this question with \textbf{HypEHR} (\textbf{Hyp}erbolic modeling of \textbf{E}lectronic \textbf{H}ealth \textbf{R}ecords), a novel compact EHR-QA framework based on hyperbolic clinical sequence modeling. The comparison of pipelines between HypEHR
and text-to-SQL is shown in Figure~\ref{fig:compare}. The resulting model achieves performance comparable to large language models, while being orders of magnitude smaller (22M) than typical LLM-based pipelines (trillions of parameters) and thus more suitable for on-premise, privacy-conscious clinical settings.

\section{Methodology}

\begin{figure*}[t]
  \centering
  \includegraphics[width=1.0\textwidth]{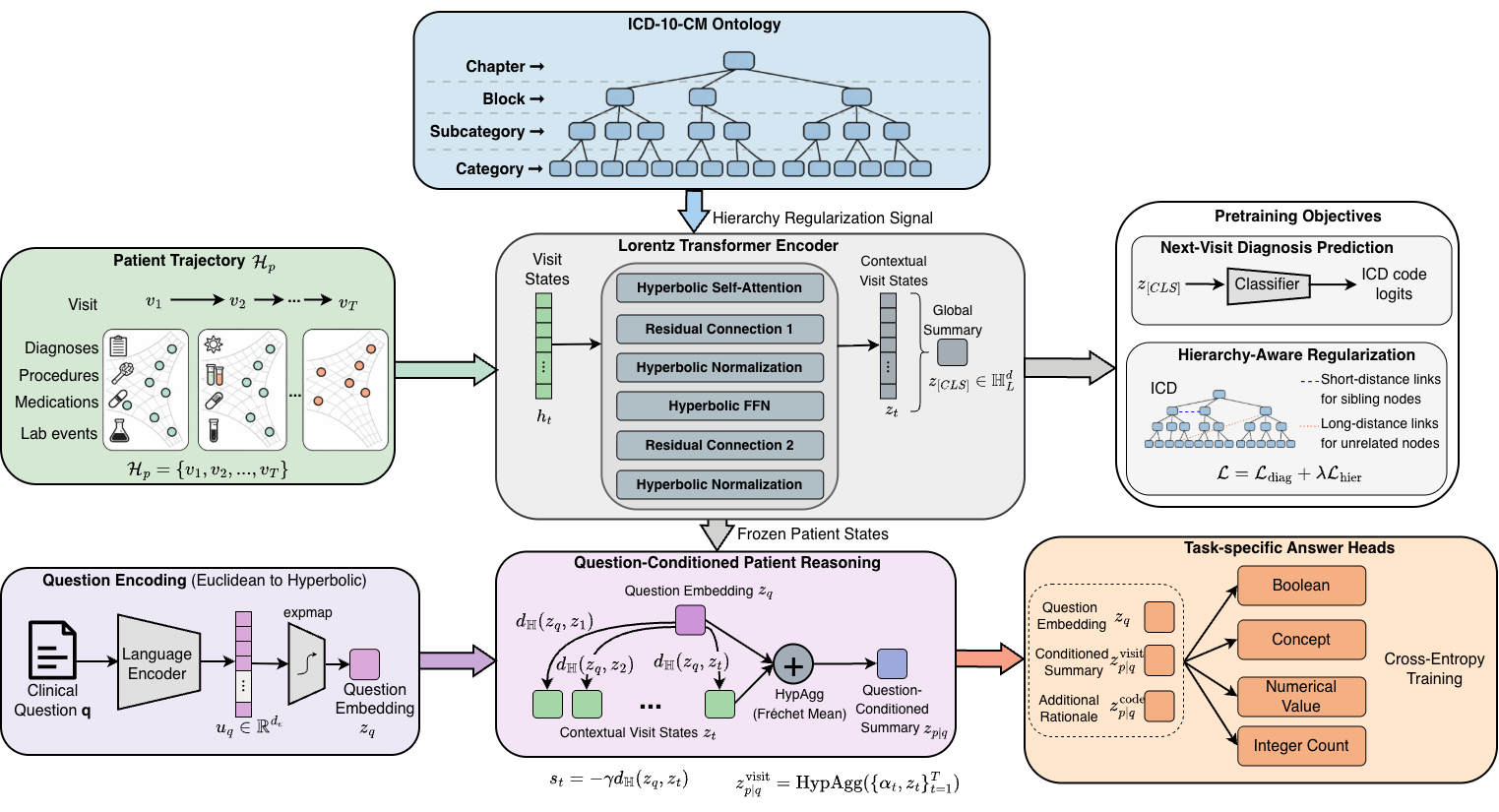}
  \caption{The overall framework of our proposed HypEHR.}
  \label{fig:model}
  \vspace{-.17in}
\end{figure*}

\subsection{Problem Definition and Model Overview}
Given a question $q$ and the visit history of patient $\mathcal{H}_p = \{v_1,\dots,v_T\}$, where each $v_t$ is associated with a set of medical concepts (including diagnosis codes, procedure codes, drug codes, admission time and other laboratory records), the model must look up the right visit, find the correct department, and return a clear answer. From the perspective of answer types, the common QA pairs can be categorized into four classes: boolean, concept, numerical, and integer count. 

We summarize the main modules of the HypEHR framework in Figure~\ref{fig:model} to provide an overview and the notations used throughout this paper in Table~\ref{tab:notation}. HypEHR consists of two stages. The first stage, \textit{patient encoder pretraining}, learns a hyperbolic patient encoder by joint targets of next-visit diagnosis prediction and hierarchy-aware regularization. The next stage, \textit{question-answer training}, trains answer type-specified heads using embeddings from the frozen patient encoder.

\subsection{Hyperbolic Clinical Sequence Encoder}
\label{sec:encoder}
We now present the first stage of our model --- \textit{patient encoder pretraining}. Each medical concept $c\in v_t$ is embedded into a Lorentzian hyperbolic manifold $\mathbb{H}^d_L$ via a learnable embedding function $e_c \in \mathbb{H}^d_L$. Within a visit, we aggregate the code embeddings using a hyperbolic attention mechanism to obtain a visit representation $h_t \in \mathbb{H}^d_L$. The sequence $\{h_t\}_{t=1}^T$ is then processed by a multi-layer Lorentz Transformer encoder, which adapts self-attention, residual connections, and normalization to the Lorentz manifold, yielding contextualized visit states $\{z_t\}_{t=1}^T$ and a global summary representation $z_{\text{[CLS]}} \in \mathbb{H}^d_L$. This global summary is then mapped to all diagnosis codes, producing next-visit diagnosis prediction loss $\mathcal{L}_{\text{diag}}$.

To encode hierarchical relationships in diagnosis code embeddings, we train patient encoder in a multi-task fashion with a hierarchy-aware regularizer $\mathcal{L}_{\text{hier}}$ using ICD code trie built by chapters $\to$ blocks $\to$ categories $\to$ subcategories, encouraging embeddings of codes that share ancestors in the ontology to be closer in hyperbolic distance than unrelated codes. We further decompose $\mathcal{L}_{\text{hier}}$ into a radial hierarchy term and a relative hierarchy term to jointly enforce depth ordering and local separation. Overall, the encoder parameters are optimized to minimize a joint objective 

\begin{equation}
\label{eq:total}
    \mathcal{L} = \mathcal{L}_{\text{diag}} + \lambda\mathcal{L}_{\text{hier}}
\end{equation}
where $\mathcal{L}_{\text{diag}}$ is the binary cross-entropy loss and 
\begin{align}
\mathcal{L}_{\text{hier}}
&= \mathcal{L}_{\text{rad}} + \mu\,\mathcal{L}_{\text{rel}}, \label{eq:hier} \\
\mathcal{L}_{\text{rad}}
&= \sum_{(p,c)\in\mathcal{P}}
\max\!\Big(0,\; \|e_p\|_{\mathbb{H}} - \|e_c\|_{\mathbb{H}} + \beta \Big), \label{eq:rad} \\
\mathcal{L}_{\text{rel}}
&= \sum_{(a,a^{+},a^{-})\in\mathcal{T}}
\max\!\Big(
0,\;
d_{\mathbb{H}}(e_a,e_{a^{+}}) \label{eq:rel} \notag \\
&\hspace{3.2em}
- d_{\mathbb{H}}(e_a,e_{a^{-}}) + \alpha
\Big)
\end{align}

with $\|e\|_{\mathbb{H}} := d_{\mathbb{H}}(e,\mathbf{o})$, and $\mathbf{o}$ denotes the origin of the Lorentz hyperboloid. Here $\mathcal{P}$ denotes parent--child pairs extracted from the ICD trie, and $\mathcal{T}$ denotes triplets where $a^{+}$ is an ancestor-related (e.g., parent/same-branch) code of $a$ and $a^{-}$ is a non-ancestor code; $\alpha,\beta>0$ are margin hyperparameters, and $\lambda,\mu>0$ 
balance the relative importance of each loss term. This yields a geometry-aware representation of patient trajectories that is later reused for downstream question answering.

\subsection{Hyperbolic EHR-QA Model}
Given a natural-language question $q$ about a patient $p$, our model combines a natural language encoder with the Lorentzian patient encoder described above. The question $q$ is first encoded by a biomedical pre-trained language encoder 
% (e.g., Bio\_ClinicalBERT\cite{alsentzer_publicly_2019}) 
into a Euclidean vector $u_q \in \mathbb{R}^{d_e}$, which is then projected into the hyperbolic manifold via an affine map followed by an exponential map at the origin, yielding a question representation $z_q \in \mathbb{H}^d_L$. We then perform hyperbolic cross-attention from $z_q$ over the sequence of visit states $\{z_t\}_{t=1}^T$: attention scores are defined as negative scaled hyperbolic distances $s_t = -\gamma d_{\mathbb{H}}(z_q, z_t)$ and normalized via softmax to obtain weights $\alpha_t$. A hyperbolic weighted Fréchet mean of visit states, $z^{\mathrm{visit}}_{p|q} = \mathrm{HypAgg}(\{\alpha_t, z_t\}_{t=1}^T)$, serves as a question-conditioned patient summary. We implement Hyperbolic Aggregation (HypAgg) as the exponential map at the origin of the weighted average of log-mapped points, which approximates the Riemannian barycenter on the Lorentz manifold. For fine-grained reasoning, we additionally apply a second-stage hyperbolic attention over code embeddings within the top-$k$ attended visits to construct a code-level rationale vector $z^{\mathrm{code}}_{p|q} \in \mathbb{H}^d_L$. Depending on the QA pair type, specialized answer heads consume $(z_q, z^{\mathrm{visit}}_{p|q}, z^{\mathrm{code}}_{p|q})$ to produce logits over the corresponding classes. The QA-specific components on top of the frozen language and patient encoders are trained using standard cross-entropy losses. Details of QA heads can be found in Appendix~\ref{sec:a_model}.

\vspace{-5pt}
\section{Experiments}
\vspace{-5pt}
\subsection{Experimental Setup}
\paragraph{Datasets and Tasks}
We adopt two representative EHR QA datasets for training and evaluation: \textbf{MIMIC-IV-Ext-Instr}~\cite{wu_instruction_2024} and the tabular subset of \textbf{EHRXQA}~\cite{bae_ehrxqa_2023}, and report the accuracy(\%) of the generated/retrieved answers. Besides, we also evaluate our model on four common clinical predictive tasks on \textbf{MIMIC-IV}~\cite{johnson_mimic-iv_2023}: (i) mortality prediction (MT), (ii) readmission prediction (RA), (iii) length-of-stay prediction (LOS), (iv) phenotype prediction (Pheno). AUPRC is adopted to evaluate the model’s performance on the above classification tasks.
\vspace{-.1in}

\paragraph{Baselines}
To comprehensively evaluate our proposed HypEHR, we adopt 6 representative methods as baselines for comparison from 3 main perspectives: (1) \textbf{text-to-SQL-based methods}: NeuralSQL~\cite{bae_ehrxqa_2023} with GPT-5.2~\cite{openai_introducing_2025} as the SQL parser, and a more light-weight version NeuralSQL-$l$ with code-smol2-text-to-sql~\cite{burtenshaw_burtenshawcode-smol2-text--sql_2024}  as the SQL parser, (2) \textbf{LLM-based methods}: Llemr~\cite{wu_instruction_2024}, EHRAgent~\cite{shi_ehragent_2024} and Llama-3-8B~\cite{aimeta_llama_2024}, where closed or open-source LLMs are used to process the question and structured patient history then generate answers, (3) \textbf{EHR representation learning-based methods}: a traditional patient sequence encoder RETAIN~\cite{choi_retain_2016} is used as the patient encoder in our workflow. Results are the mean and standard deviation of 5 runs over different random seeds.

More details about data processing (including task definitions), baseline implementations, and hyperparameter tuning could be found in Appendix~\ref{sec:a_dataset}, ~\ref{sec:a_baseline}, and ~\ref{sec:hparam}.

\vspace{-5pt}
\subsection{Experimental Results}
\paragraph{Question-answering Results}
Table~\ref{tab:model_performance} presents the accuracy for each baseline. Since EHRXQA formulates questions as executable SQL queries, methods that explicitly leverage LLMs for SQL generation, such as NeuralSQL and EHRAgent, naturally align with this paradigm and therefore achieve superior performance. Notably, our method achieves the best performance among all approaches that do not rely on large language model–based frameworks (e.g., GPT-5.2), highlighting the potential of hyperbolic embeddings.
%Compared to EHRXQA, MIMIC-IV-Ext-Instr questions rely more on clinical reasoning and free-form understanding, while EHRXQA is more schema-grounded and often directly translatable to executable SQL with deterministic answers. Consequently, EHRXQA has a more constrained solution space and is typically easier to evaluate objectively, which helps explain the generally higher model performance on this benchmark.

\begin{table}[h]
\centering
\begin{tabular}{lcccc}
\hline
\textbf{Model} & \textbf{EHRXQA} & \textbf{MIMIC-Instr}  \\
\hline
RETAIN         &     81.19 \pms{1.95} &  65.91 \pms{0.84}      \\
NeuralSQL      &     \underline{95.97 \pms{0.50}} &  75.17 \pms{0.73}   \\
NeuralSQL-$l$  &     86.72 \pms{0.97} &   67.85 \pms{0.85}    \\
Llama-3        &   82.88 \pms{1.38}  &     70.90 \pms{0.86}      \\
Llemr          &     87.25 \pms{0.77}    &   \underline{77.53 \pms{0.54}}    \\
EHRAgent       &     93.06 \pms{1.09}   &      74.16 \pms{0.56}       \\
\hline
HypEHR  &     89.53 \pms{0.60} &  76.02 \pms{0.41}   \\
\hline
\end{tabular}
\caption{Accuracy(\%) of models across two QA datasets. \textbf{MIMIC-Instr} denotes MIMIC-IV-Ext-Instr. \underline{NeuralSQL (EHRXQA)} and \underline{Llemr (MIMIC-Instr)} are official baselines based on large-parameter LLMs, and therefore serve as approximate upper bounds for current performance on these datasets.}
\label{tab:model_performance}
\vspace{-.16in}
\end{table}

\paragraph{Clinical Prediction Results}
To further assess whether our hyperbolic patient encoder learns generally useful representations beyond EHR-QA, we attach simple classification heads for standard clinical prediction tasks and compare its performance against baselines. The results can be found in Figure~\ref{fig:prediction}. HypEHR achieves the best performance on readmission prediction and phenotype prediction, and demonstrates performance comparable to the LLM–based baseline Llemr, on remaining tasks.

\begin{figure}[t]
  \centering
  \includegraphics[width=\columnwidth]{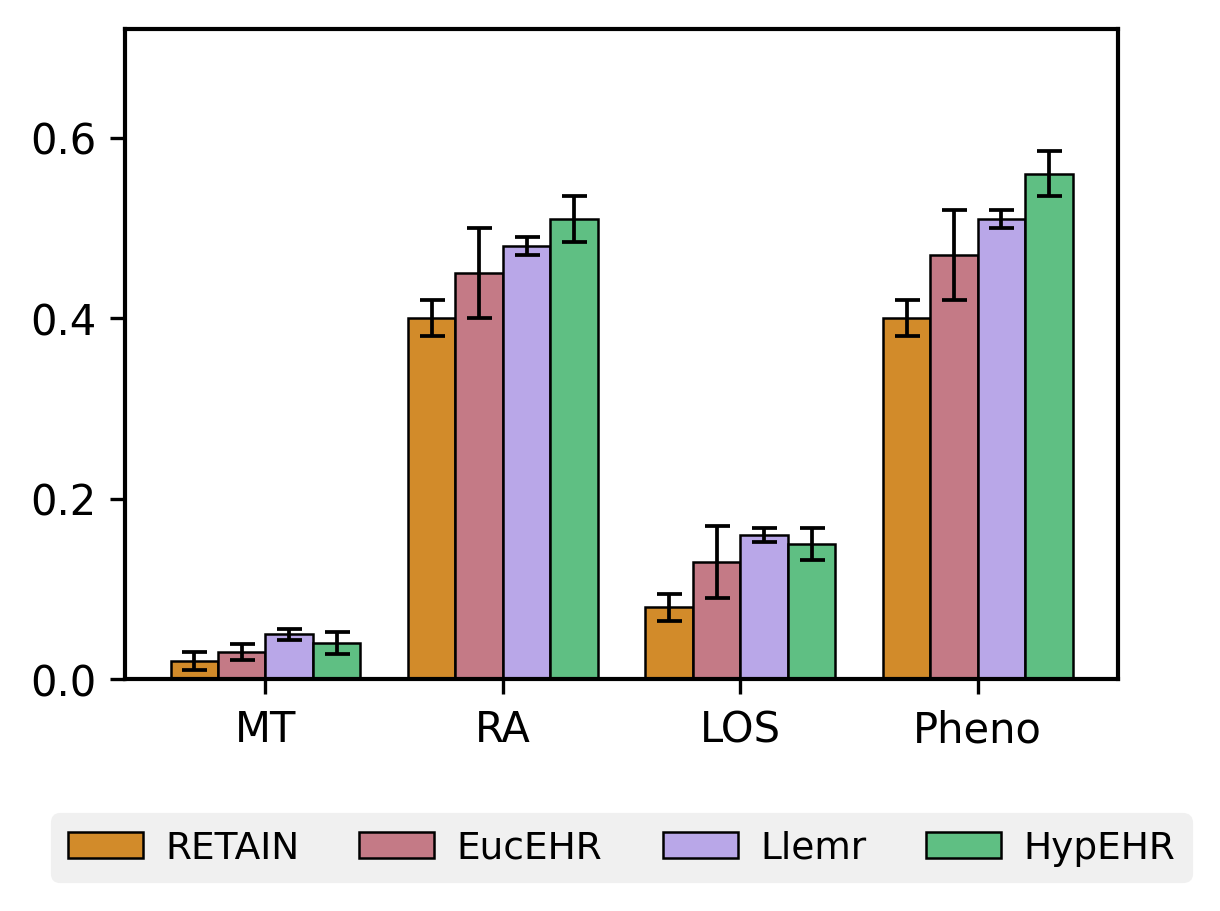}
  \caption{The AUPRC values of four models on the MIMIC-IV dataset.}
  \label{fig:prediction}
  \vspace{-.16in}
\end{figure}

% \begin{table}[h]
% \centering
% \begin{tabular}{lcccc}
% \hline
% Method & MT & RA & LOS & Pheno \\
% \hline
% RETAIN & 21.23 \pms{0.75} & 40.12 \pms{0.88} & 8.93 \pms{0.95} & 40.33 \pms{0.57} \\
% EucEHR &   &   &   &   \\
% Llemr  &   &   &   &   \\
% HypEHR &   &   &   &   \\
% \hline
% \end{tabular}
% \caption{AUPRC on MIMIC-IV across tasks.}
% \label{tab:mimic_iv_auprc}
% \end{table}
\vspace{-.1in}

\paragraph{Ablation Study}
To assess the contribution of each part attribute to model performance, we conduct an ablation study, evaluating HypEHR under different variants. Table~\ref{tab:ablation} reports this study. The results highlight the relative importance of different components within the model architecture. Pretraining of the patient encoder plays the most critical role, and even under the same pretraining setting, the Euclidean model performs substantially worse than the hyperbolic model on the test set. Although the hierarchy loss is not the primary contributor to overall model performance, it provides significant benefits in capturing and enforcing the hierarchical structure of codes.

\begin{table}[h]
\centering
\begin{tabular}{lccc}
\hline
\textbf{Model} & \textbf{EHRXQA} & \textbf{MIMIC-Instr} \\
\hline
w/o $L_{hier}$ &  82.72 \pms{3.41} & 70.38 \pms{0.54} \\ 
w/o pretraining  & 74.05 \pms{4.76} & 68.12 \pms{1.39} \\
EucEHR &  80.33 \pms{1.14} & 69.88 \pms{1.07} \\
\hline
HypEHR & 89.53 \pms{0.60} & 76.02 \pms{0.41}  \\
\hline
\end{tabular}
\caption{Ablation study on different variants of HypEHR. $\mathcal{L}_{\text{hier}}$ refers to Equation~\eqref{eq:hier}, pretraining denotes next-visit diagnosis prediction pretraining in Section~\ref{sec:encoder}, and EucEHR uses the same structure and pretraining task as HypEHR, but calculations are in Euclidean space.}
\label{tab:ablation}
\vspace{-.2in}
\end{table}

\paragraph{Geometry Analysis}
To test whether embeddings reflect the intrinsic ICD diagnosis hierarchy, we sample codes beginning with \texttt{I}, group them by tree depth, and compute for each code $c$ at depth $k$ an embedding radius---$\|e_c^{\text{Euc}}\|_2$ for the Euclidean baseline and $r_c^{\text{Lor}}=d_{\mathbb{H}}(o,e_c^{\text{Lor}})$ for the Lorentz model---then average within each level to obtain $\bar r_k^{\text{Euc}}$ and $\bar r_k^{\text{Lor}}$. Figure~\ref{fig:rvd} shows that Euclidean radii depend only weakly and noisily on depth (e.g., specific codes like \texttt{I21.9}/\texttt{I50.9} can have norms similar to \texttt{I10}), whereas the Lorentz model yields a clear monotonic increase, pushing deeper diagnoses farther from the origin; this radius--depth alignment suggests hyperbolic geometry better matches the tree-like expansion of the ICD hierarchy by allocating more capacity to fine-grained concepts near the boundary. 
%This validates our Proposition~\ref{prop:icd_lorentz} claiming that ICD codes admits low-distortion hyperbolic embeddings, and explains the accuracy gains of HypEHR over EucEHR (Table~\ref{tab:ablation}).

\begin{figure}[t]
  \centering
  \includegraphics[width=\columnwidth]{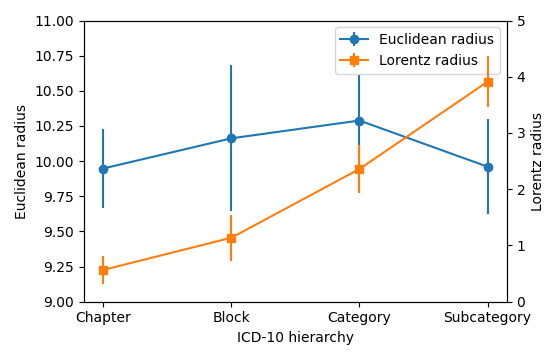}
  \caption{Comparison between Hyperbolic norms and Euclidean norms. 
  % Lorentz embeddings are measured by hyperbolic geodesic distance to the origin on the Lorentz (hyperboloid) manifold, while Euclidean embeddings are measured by the standard Euclidean \(L_2\) norm.
  }
  \label{fig:rvd}
\vspace{-.2in}
  
\end{figure}

\section{Conclusion}
In this work, we revisited EHR question answering from the perspective of data geometry. We introduced HypEHR, a lightweight Lorentz-based model that jointly encodes questions and clinical sequences. Experiments on MIMIC-IV-based QA benchmarks demonstrate comparative results with LLM baselines while using substantially fewer parameters.

\section*{Limitations}
Our approach also has several limitations. First, it relies on a preprocessing step that restructures each dataset so that answers fall into a small set of predefined categories (e.g., boolean, categorical concept, integer count, numeric values), which introduces additional engineering effort and computational overhead. Second, the current framework is restricted to discriminative answer types and does not naturally handle more open-ended, generative responses, limiting its extensibility to free-form clinical question answering. Third, hyperbolic neural networks are computationally more complex than their Euclidean counterparts, and the relative immaturity of public hyperbolic geometry libraries can lead to implementation challenges and potential instability in large-scale training.

\section*{Potential Risks and Ethical Considerations}
This work studies EHR question answering as a research problem and is not intended for direct clinical deployment. Potential risks include misinterpretation of model outputs if used for medical decision-making without proper clinical oversight, as well as biases inherited from retrospective EHR data. To mitigate these risks, our model is evaluated only on de-identified, publicly available datasets and does not provide diagnostic or treatment recommendations. We rely on the official de-identification procedures of these datasets and do not access or reconstruct any personally identifying information. We emphasize that such systems should be used as decision-support tools under human supervision rather than autonomous clinical agents. Future work should further investigate robustness, calibration, and fairness across patient subpopulations before real-world use.

\section*{Acknowledgments}
This work was supported in part by the DOE SEA-CROGS project (DE-SC0023191) and the AFOSR project (FA9550-24-1-0231). Research reported in this work was also partially funded through a Patient-Centered Outcomes Research Institute (PCORI) Award 102727. The views in this work are solely the responsibility of the authors and do not necessarily represent the views of the Patient-Centered Outcomes Research Institute (PCORI), its Board of Governors or Methodology Committee.

\bibliography{references,custom}

\appendix
\section{Notations}
\label{sec:a_notations}
This section summarizes the notation used throughout the paper for clarity and ease of reference.
Table~\ref{tab:notation} lists the definitions of all symbols appearing in the main text and appendices.

\begin{table*}
\centering
\small
\renewcommand{\arraystretch}{1.15}
\begin{tabularx}{\textwidth}{@{} l X @{}}
\toprule
\textbf{Symbol} & \textbf{Meaning} \\
\midrule
$p$ & Patient index. \\
$q$ & Natural-language question about patient $p$. \\
$\mathcal H_p=\{v_1,\dots,v_T\}$ & Visit history (trajectory) of patient $p$. \\
$T$ & Number of visits in the trajectory. \\
$v_t$ & The $t$-th visit. \\
$c \in v_t$ & A medical concept/code occurring in visit $v_t$. \\
$d$ & Hyperbolic embedding dimension (Lorentz model). \\
$d_e$ & Euclidean embedding dimension of the text encoder output. \\
$\mathbb H_L^d$ & $d$-dimensional hyperbolic space in the Lorentz (hyperboloid) model. \\
$\mathbb R^d$ & Euclidean Tangent Space $\mathbb H_L^d \cong \mathbb R^d$.  \\
$o$ & Origin point on the Lorentz hyperboloid (used for exp/log maps and radii). \\
$d_{\mathbb H}(\cdot,\cdot)$ & Hyperbolic geodesic distance on $\mathbb H_L^d$. \\
$e_c \in \mathbb H_L^d$ & Hyperbolic embedding of medical concept/code $c$. \\
h\textsubscript{t} $\in \mathbb H_L^d$ & Visit representation aggregated from code embeddings within $v_t$. \\
$\{z_t\}_{t=1}^T \subset \mathbb H_L^d$ & Contextualized visit states from the Lorentz Transformer encoder. \\
$z_{\mathrm{[CLS]}} \in \mathbb H_L^d$ & Global patient summary representation (CLS token/state). \\
$u_q \in \mathbb R^{d_e}$ & Euclidean question representation from the language encoder (pooled). \\
$u_i \in \mathbb R^{d_e}$ & Euclidean token embedding of the $i$-th question token. \\
$\tilde u_i = W u_i + b$ & Affine projection from Euclidean text space to tangent space at $o$. \\
$\exp_o(\cdot)$, $\log_o(\cdot)$ & Exponential and logarithmic maps between $\mathbb R^d$ and $\mathbb H_L^d$. \\
$z_i^q = \exp_o(\tilde u_i) \in \mathbb H_L^d$ & Hyperbolic embedding of question token $i$. \\
$z_q \in \mathbb H_L^d$ & Pooled hyperbolic question embedding (e.g., via Hyperbolic aggregation). \\
$\gamma$ & Cross-attention temperature/scale in scores $s_t=-\gamma d_{\mathbb H}(z_q,z_t)$. \\
$\alpha_t$ & Attention weight over visits (softmax-normalized). \\
$\mathrm{HypAgg}(\cdot)$ & Hyperbolic aggregation operator (approx.\ Fr\'echet mean on $\mathbb H_L^d$). \\
$z^{\mathrm{visit}}_{p|q} \in \mathbb H_L^d$ & Question-conditioned visit-level patient summary. \\
$z^{\mathrm{code}}_{p|q} \in \mathbb H_L^d$ & Question-conditioned code-level patient summary / rationale vector. \\
$L_{\mathrm{diag}}$ & Next-visit diagnosis prediction loss (binary cross-entropy). \\
$L_{\mathrm{hier}}$ & Hierarchy-aware regularization loss for ICD code embeddings. \\
$L_{\mathrm{rad}}$ & Radial hierarchy term encouraging depth ordering by hyperbolic radius. \\
$L_{\mathrm{rel}}$ & Relative hierarchy term enforcing ancestor vs non-ancestor separation. \\
$\lambda$ & Weight on hierarchy regularization in $L = L_{\mathrm{diag}} + \lambda L_{\mathrm{hier}}$. \\
$\mu$ & Weight on relative term in $L_{\mathrm{hier}} = L_{\mathrm{rad}} + \mu L_{\mathrm{rel}}$. \\
$\beta$ & Margin in radial hierarchy loss $L_{\mathrm{rad}}$. \\
$\alpha$ & Margin in relative hierarchy loss $L_{\mathrm{rel}}$. \\
$\mathcal P_{\mathrm{ICD}}$ & Set of (parent, child) pairs extracted from the ICD trie. \\
$\mathcal T_{\mathrm{ICD}}$ & Set of triplets $(a,a^+,a^-)$ for relative hierarchy training. \\
$\|e\|_{\mathbb H} := d_{\mathbb H}(e,o)$ & Hyperbolic radius (distance to origin). \\
$C_p=\{c_1,\dots,c_K\}$ & Per-patient candidate concept set for concept QA (plus optional null). \\
$c_{\mathrm{null}}$ & Learned ``no-answer'' pseudo-concept for concept QA. \\
$E=\{e_1,\dots,e_M\}$ & Candidate numeric events for a variable (e.g., lab test events). \\
$e_j$ & Numeric event with timestamp $t_j$, value $\nu_j$, and embedding $h_j^{\mathrm{val}}$. \\
$e_{\mathrm{null}}$ & Learned null event for no-answer numeric questions. \\
$e_c^{\text{Euc}}$ & Euclidean embedding of code $c$ (for the EucEHR baseline). \\
$e_c^{\text{Lor}}$ & Lorentz embedding of code $c$ (for HypEHR). \\
$r_c^{\text{Lor}}$ & Hyperbolic radius of code $c$, defined as $d_{\mathbb{H}}(o, e_c^{\text{Lor}})$. \\
$\bar{r}_k^{\text{Euc}}$ & Average Euclidean norm of codes at tree depth $k$. \\
$\bar{r}_k^{\text{Lor}}$ & Average hyperbolic radius of codes at tree depth $k$. \\
$\epsilon$ & Tolerance for matching a gold numeric value to candidate events. \\
$\mathcal I=\{j:\,|\nu_j-\nu|<\epsilon\}$ & Index set of events matching target value $\nu$. \\
$K_{\max}$ & Maximum discretized count for the count head. \\
\bottomrule
\end{tabularx}
\caption{Notation used throughout the paper. We use the Lorentz (hyperboloid) model $\mathbb H_L^d$ for hyperbolic embeddings and denote the hyperbolic distance by $d_{\mathbb H}$.}
\label{tab:notation}
\end{table*}

\section{Experiment Details}
\label{sec:a_experiment_details}

\subsection{Implementation Details}
\label{sec:implementation}

All experiments were conducted using 4 NVIDIA A100 GPUs with 80GB memory each. Models were trained with a batch size of 48 using the geoopt.optim.RiemannianAdam\footnote{\scriptsize \url{https://geoopt.readthedocs.io/en/latest/optimizers.html}} optimizer, and weight decay $1\times10^{-2}$. The learning rate was set to $3\times10^{-4}$ with linear warmup over the first 10\% of training steps, followed by cosine decay. Gradient norms were clipped to a maximum $\ell_2$ norm of 1.0. Dropout with rate 0.1 was applied to attention weights, feed-forward layers, and residual connections. Training used early stopping based on validation loss with a patience of 10 epochs for both patient encoder pretraining and question-answering head training. The maximum number of epochs was set to 250 for pretraining (approximately 2 hours) and 200 for question-answering heads (approximately 5 minutes), and the checkpoint with the best validation performance was selected for downstream evaluation. 

The Lorentz Transformer encoder consisted of 3 layers, each with 6 attention heads. The hyperbolic embedding dimension was set to 390, consistent with ~\citet{he_helm_2025}. The total number of trainable parameters in the full model was approximately 22 million, corresponding to a model size of 84 MB when stored in 32-bit floating point format. All models were implemented in PyTorch\footnote{\scriptsize \url{https://pytorch.org}} and leveraged Geoopt\footnote{\scriptsize \url{https://github.com/geoopt/geoopt}} for Riemannian optimization on the Lorentz manifold. Mixed-precision training (FP16) was enabled via NVIDIA Apex to reduce memory usage and improve throughput. For text encoder, we use Bio\_ClinicalBERT~\cite{alsentzer_publicly_2019}\footnote{\scriptsize \url{https://huggingface.co/emilyalsentzer/Bio_ClinicalBERT}.}.

\subsection{Data Preprocessing}
\label{sec:a_dataset}
%TODO: Add examples here

\paragraph{EHRXQA}
EHRXQA is a multi-modal EHR question answering dataset that links MIMIC-IV structured tables with aligned MIMIC-CXR chest X-ray images to generate Image-, Table-, and Image+Table QA pairs requiring both unimodal and cross-modal reasoning. In our experiments, we use the tabular subset of EHRXQA, and categorize these questions to Boolean Value, Count, Float Value, and Concept according to their answer type. The train/valid/test split is provided in json files in the dataset.

\paragraph{MIMIC-IV-Ext-Instr}
We use the Schema Alignment subset of MIMIC-IV-Ext-Instr and adopt the same preprocessing procedure as EHRXQA. In addition, our training, validation, and test splits follow those used in Llemr~\cite{wu_instruction_2024}.

\paragraph{MIMIC-IV}
We follow the data preprocessing process of ~\citet{chen_predictive_2024}, filtering out patients with less than two visits. ICD-9-CM codes are mapped to unique ICD-10-CM codes by General Equivalence Mappings (GEMs)\footnote{ \scriptsize The ICD-9-CM to ICD-10-CM General Equivalence Mappings (GEMs) are provided by the Centers for Medicare \& Medicaid Services (CMS) and made available via the National Bureau of Economic Research (NBER): \url{https://www.nber.org/research/data/icd-9-cm-and-icd-10-cm-and-icd-10-pcs-crosswalk-or-general-equivalence-mappings}.}. Statistics of processed data is shown in Table~\ref{tab:stats}.

\begin{table}[h]
\centering
\begin{tabular}{lcc}
\hline
\textbf{Dataset} & \textbf{MIMIC-IV}\\
\hline
\# of patients      &       14,155        \\
\# of visits          &       42,053        \\
Avg. \# of visits per patient      &        2.97           \\
Max \# of visits per patient  &         70     \\
\hline
\# of unique diagnoses   & 11,225  \\
\# of unique procedures   &  8,352 \\
\# of unique medicines   &  196 \\
\hline
\end{tabular}
\caption{Statistics of MIMIC-IV after pre-processing.}
\label{tab:stats}
\end{table}

\subsection{Baseline Implementations}
\label{sec:a_baseline}
\begin{itemize}
    \item RETAIN~\cite{choi_retain_2016} is a classical model for patient modeling. We use RETAIN to replace the patient encoder in our model, serving as a traditional baseline model.
    \item NeuralSQL is the standard baseline in EHRXQA~\cite{bae_ehrxqa_2023}, using gpt-3.5-turbo-0613~\cite{openai_introducing_2024} to generate SQL queries then retrieve answer from database. We replaced gpt-3.5-turbo-0613 by SOTA model GPT-5.2~\cite{openai_introducing_2025}. NeuralSQL-$l$ is a light-weighted version where a text-to-SQL specified small language model code-smol2-text-to-sql~\cite{burtenshaw_burtenshawcode-smol2-text--sql_2024} serves as the text-to-SQL parser.
    \item EHRAgent~\cite{shi_ehragent_2024} is an EHR question-answering intelligent agent equipped with a Python code interface and tool calling capabilities.
    \item Llama-3~\cite{aimeta_llama_2024} is a compact, open-weight large language model. We directly use the problem and related patient history codes as a prompt to generate the answer.
    \item Llemr~\cite{wu_instruction_2024}: Llemr is an instruction-tuned large language model framework that enables LLMs to process and interpret complex EHR data for diverse clinical question answering and predictive tasks. We use the pre-trained weights provided by the authors\footnote{\scriptsize \url{https://github.com/zzachw/llemr}.}.
\end{itemize}

\section{Additional Results}
\label{sec:a_results}

\paragraph{Accuracy of Each Type of Question}
For all question types (Boolean, single-concept, numerical, and count queries, including no-answer cases), we report exact match accuracy, i.e., the proportion of questions for which the predicted answer array exactly matches the gold answer array. Results are shown in Table~\ref{tab:accuracy_ehrxqa} and~\ref{tab:accuracy_instr}.

\begin{table*}[h]
\centering
\begin{tabular}{lccccc}
\hline
\textbf{Model} & \textbf{BL} & \textbf{CT} & \textbf{FL} & \textbf{CP} & \textbf{Overall} \\
\hline
RETAIN         &   84.36 \pms{3.46}    &  82.15 \pms{5.29}  &  78.22 \pms{4.23} & 80.74 \pms{1.20} & 81.19 \pms{1.95} \\
NeuralSQL      &    96.44 \pms{1.42}    &   95.71 \pms{0.96} & 94.97 \pms{0.58}  & 96.87 \pms{1.07} & 95.97 \pms{0.50} \\
NeuralSQL-$l$  &     88.40 \pms{2.37}     &  87.12 \pms{1.98}  & 84.53 \pms{2.25}  & 87.23 \pms{0.97} & 86.72 \pms{0.97}\\
Llama-3        &    86.40 \pms{2.25}     &  81.56 \pms{3.47}   & 81.09 \pms{2.88}   & 83.22 \pms{1.98} & 82.88 \pms{1.38} \\
Llemr        &     87.85 \pms{1.54}    &  88.96 \pms{2.01}  & 84.23 \pms{0.94}  &  88.21 \pms{1.46} & 87.25 \pms{0.77} \\
EHRAgent       &     92.46 \pms{1.14}    &  94.50 \pms{2.98}   & 93.26 \pms{1.84}  & 91.89 \pms{2.08} & 93.06 \pms{1.09} \\
\hline
HypEHR  &    91.54 \pms{1.74}   &  89.41 \pms{0.97}   &   90.48 \pms{0.78} & 87.05 \pms{1.33} & 89.53 \pms{0.60} \\
\hline
\end{tabular}
\caption{Accuracy(\%) for each question type on EHRXQA. Reported values are mean $\pm$ std. \textbf{BL}: Boolean Value, \textbf{CT}: Count, \textbf{FL}: Float Value, \textbf{CP}: Concept. Weights: (BL, CT, FL, CP) = (0.21, 0.26, 0.27, 0.26).}
\label{tab:accuracy_ehrxqa}
\end{table*}

\begin{table*}[h]
\centering
\begin{tabular}{lccccc}
\hline
\textbf{Model} & \textbf{BL} & \textbf{CT} & \textbf{FL} & \textbf{CP} & \textbf{Overall} \\
\hline
RETAIN         &   66.37 \pms{1.85}    &  64.29 \pms{2.00}  & 65.62 \pms{1.06}  & 67.33 \pms{1.94} & 65.91 \pms{0.84} \\
NeuralSQL      &  74.32 \pms{0.87}  &  76.03 \pms{1.14}  &  75.11 \pms{1.94} & 75.33 \pms{1.23} & 75.17 \pms{0.73} \\
NeuralSQL-$l$  &   69.64 \pms{1.56}    &  68.47 \pms{1.75}  & 67.86 \pms{1.64}  & 65.29 \pms{1.84} & 67.85 \pms{0.85} \\
Llama-3        &    70.35 \pms{1.74}     &  71.54 \pms{2.01}   &  69.98 \pms{1.47} & 72.09 \pms{1.75} & 70.90 \pms{0.86} \\
EHRAgent       &   74.35 \pms{1.25}     &   73.64 \pms{1.42}   & 72.54 \pms{0.89}  & 76.55 \pms{0.97} & 74.16 \pms{0.56} \\
Llemr        &    77.58 \pms{0.95}      &   76.21 \pms{1.04}     & 77.85 \pms{1.21}  & 78.34 \pms{0.96} & 77.53 \pms{0.54} \\
\hline
HypEHR  &    76.51 \pms{0.75}  &   77.32 \pms{0.98} & 74.46 \pms{0.90}  & 76.28 \pms{0.43} & 76.02 \pms{0.41} \\
\hline
\end{tabular}
\caption{Accuracy(\%) for each question type on MIMIC-IV-Ext-Instr. Reported values are mean $\pm$ std. \textbf{BL}: Boolean Value, \textbf{CT}: Count, \textbf{FL}: Float Value, \textbf{CP}: Concept. Weights: (BL, CT, FL, CP) = (0.25, 0.22, 0.30, 0.23).}
\label{tab:accuracy_instr}
\end{table*}

\section{Model Details}
\label{sec:a_model}
\subsection{Answer Types and Prediction Heads}
\label{sec:qa_heads_details}

In this work, we focus on \emph{per-patient} EHR-QA and restrict ourselves to the following answer categories:

\begin{itemize}
    \item \textbf{Boolean / existence questions:} answers of the form \([0]\) or \([1]\), denoting \textit{false} or \textit{true}. These questions ask whether a given condition, procedure, or event exists in the patient record.
    \item \textbf{Concept questions:} answers as a single-element string array, e.g., \([\text{``pneumonia''}]\) or \([\text{``low lung volumes''}]\). We align such strings to a discrete concept vocabulary (e.g., ICD, LOINC, or a curated set of findings) whenever possible.
    \item \textbf{Numeric value questions:} answers as a single floating-point value array, e.g., \([5.0]\), corresponding to a laboratory test result or scalar measurement.
    \item \textbf{Count questions:} answers as a single integer array \([k]\) with $k > 1$, e.g., the count of events or measurements matching a condition.
    \item \textbf{No-answer cases:} answers as an empty array \([]\), indicating that \emph{no} event in the patient record satisfies the query.
\end{itemize}
%\clearpage
We explicitly exclude questions whose answers are \emph{lists of patient identifiers}, such as \([10501557, 12215941, \dots]\), since these correspond to cohort-level retrieval rather than single-patient QA and are outside our scope.

\paragraph{Question-conditioned hyperbolic representations.}
All heads operate on a shared set of hyperbolic representations. Let $\{z_t\}_{t=1}^T \subset \mathbb{H}^d_L$ denote the Lorentzian visit-level embeddings of a patient's trajectory, and $z_{\text{[CLS]}} \in \mathbb{H}^d_L$ the global patient representation produced by our Lorentz Transformer encoder. A natural-language question $q$ is encoded by a text encoder into Euclidean token embeddings $\{u_i\}$, which are mapped into the Lorentz model via the exponential map at the origin:
\begin{equation}
    \tilde{u}_i = W u_i + b, 
    \quad
    z_i^q = \exp_o(\tilde{u}_i) \in \mathbb{H}^d_L.
\end{equation}
We obtain a pooled question representation $z_q \in \mathbb{H}^d_L$ via hyperbolic aggregation of $\{z_i^q\}$.

We then compute a question-conditioned patient summary at the visit level using hyperbolic attention:
\begin{equation}
    \alpha_t \propto \exp\big(-\gamma\, d_{\mathbb{H}}(z_q, z_t)\big)
\end{equation}
\begin{equation}
    z_{p|q}^{\text{visit}} = \mathrm{HypAgg}\big(\{\alpha_t, z_t\}_{t=1}^T\big)
\end{equation}
where $d_{\mathbb{H}}(\cdot,\cdot)$ is the Lorentzian distance and $\mathrm{HypAgg}$ is a Fr\'echet mean operator on $\mathbb{H}^d_L$. For concept-level reasoning, we further refine attention to the code-level within the top-$k$ attended visits, yielding a code-level summary $z_{p|q}^{\text{code}} \in \mathbb{H}^d_L$.

For all prediction heads, we map hyperbolic vectors back to the tangent space at the origin via the logarithmic map,
\begin{equation}
    \hat{z} = \log_o(z) \in \mathbb{R}^d,
\end{equation}
and feed $\hat{z}$ (optionally concatenated with other features) into Euclidean MLPs.

\subsubsection{Boolean / Existence Head}

This head handles questions whose answers are encoded as \([0]\) or \([1]\).

\paragraph{Input.}
We concatenate the question-conditioned patient representation and the question embedding in tangent space:
\begin{equation}
    h_{\text{bool}} = \hat{z}_{p|q}^{\text{visit}} \oplus \hat{z}_q \in \mathbb{R}^{2d}.
\end{equation}

\paragraph{Output.}
A small MLP produces logits $o \in \mathbb{R}^2$ for the labels ``no'' and ``yes'':
\begin{equation}
    o = \mathrm{MLP}_{\text{bool}}(h_{\text{bool}}),
    \quad
    p = \mathrm{softmax}(o),
\end{equation}
where $p_y$ denotes the predicted probability of label $y \in \{0,1\}$.

\paragraph{Loss.}
With ground-truth $y \in \{0,1\}$ derived from \([0]/[1]\), the loss is standard cross-entropy:
\begin{equation}
    \mathcal{L}_{\text{bool}} = - \log p_y.
\end{equation}
If a boolean-type question is annotated with an empty array \([]\), we normalize it to $y=0$ during preprocessing.

\subsubsection{Concept Head}

This head is used for questions whose answer is a single concept string, e.g., a diagnosis or finding.

\paragraph{Candidate set.}
For each patient, we construct a per-patient candidate set
\begin{equation}
    \mathcal{C}_p = \{c_1, \dots, c_K\},
\end{equation}
containing all concepts (codes or findings) appearing in that patient's EHR, plus an optional learned ``no-answer'' pseudo-concept $c_{\text{null}}$. Each candidate concept $c_j$ has a hyperbolic embedding $e_{c_j} \in \mathbb{H}^d_L$.

\paragraph{Input and scoring.}
We use the question-conditioned code-level representation $z_{p|q}^{\text{code}}$ and compute pairwise scores against each candidate:
\begin{equation*}
    \phi_j = \mathrm{MLP}_{\text{pair}}\big(\log_o(z_{p|q}^{\text{code}}) \oplus \log_o(e_{c_j})\big)
\end{equation*}
\begin{equation}
    s_j = w^\top \phi_j
\end{equation}

\paragraph{Output.}
We apply a softmax over all candidates:
\begin{equation}
    p_j = \frac{\exp(s_j)}{\sum_{k} \exp(s_k)}.
\end{equation}
At inference time, we select $\hat{c} = \arg\max_j p_j$ and output its associated string.

\paragraph{Loss.}
Let $c^{\*}$ be the target concept aligned from the answer string, and $j^{\*}$ its index in $\mathcal{C}_p$ (or the index of $c_{\text{null}}$ if the answer is empty):
\begin{equation}
    \mathcal{L}_{\text{concept-1}} = - \log p_{j^{\*}}.
\end{equation}

\subsubsection{Float Value Head}

This head handles questions whose answers are single numeric values, typically derived from laboratory tests or scalar measurements.

\paragraph{Event candidates.}
For a given variable (e.g., creatinine), we collect all matching events in the patient record:
\begin{equation}
    \mathcal{E} = \{e_1, \dots, e_M\},
\end{equation}
where each event $e_j$ has a timestamp $t_j$, a scalar value $\nu_j$, and a hyperbolic embedding $h_j^{\text{val}} \in \mathbb{H}^d_L$ (e.g., obtained from the corresponding visit state and variable identity). We additionally introduce a learned ``null event'' $e_{\text{null}}$ for no-answer cases.

\paragraph{Input and scoring.}
We use the question-conditioned visit-level representation $z_{p|q}^{\text{visit}}$ and compute pairwise scores:
\begin{equation*}
    \phi_j = \mathrm{MLP}_{\text{val}}\big(\log_o(z_{p|q}^{\text{visit}}) \oplus \log_o(h_j^{\text{val}})\big),
\end{equation*}
\begin{equation}
    s_j = w^\top \phi_j
\end{equation}

\paragraph{Output.}
We apply a softmax over all candidate events (including the null event):
\begin{equation}
    p_j = \frac{\exp(s_j)}{\sum_k \exp(s_k)}.
\end{equation}
The predicted answer is the value associated with the selected event, i.e., $\hat{v} = \nu_{\hat{j}}$ where $\hat{j} = \arg\max_j p_j$.

\paragraph{Loss.}
We treat numeric value prediction as a pointer-selection problem. Given a ground-truth value $\nu^{\*}$, we align it to one or more events in $\mathcal{E}$:
\begin{equation}
    \mathcal{I} = \{ j \mid |\nu_j - \nu^{{\*}}| < \epsilon \},
\end{equation}
for a small tolerance $\epsilon$. If at least one matching event exists, the loss is a multi-positive log-loss:
\begin{equation}
    \mathcal{L}_{\text{value}} = - \log \sum_{j \in \mathcal{I}} p_j.
\end{equation}
If no event in $\mathcal{E}$ matches the target value (or the gold answer is an empty array), we set $\mathcal{I} = \{j_{\text{null}}\}$ to select the null event.

\subsubsection{Count Head}

The count head is responsible for questions that ask for the number of events, visits, or occurrences satisfying a condition.

\paragraph{Input.}
We again use the question-conditioned visit-level representation:
\begin{equation}
    h_{\text{count}} = \hat{z}_{p|q}^{\text{visit}} \oplus \hat{z}_q.
\end{equation}

\paragraph{Output.}
We discretize counts into $\{0, 1, \dots, K_{\max}\}$, where $K_{\max}$ is chosen based on the empirical distribution (e.g., a high percentile). The head outputs logits $o \in \mathbb{R}^{K_{\max}+1}$:
\begin{equation}
    o = \mathrm{MLP}_{\text{count}}(h_{\text{count}}),
    \quad
    p = \mathrm{softmax}(o),
\end{equation}
where $p_k$ denotes the predicted probability of count $k$.

\paragraph{Loss.}
For a ground-truth count $k^{\*}$ (clipped to $K_{\max}$ if necessary), we use cross-entropy:
\begin{equation}
    \mathcal{L}_{\text{count}} = - \log p_{k^{\*}}.
\end{equation}
When the original answer is an empty array for a count-type question, we normalize it to $k^{\*} = 0$ during preprocessing.

\subsubsection{Overall Objective}

For each question, exactly one head is activated based on the parsed answer type. The total QA loss aggregates the head-specific losses together with auxiliary pretraining and geometry-aware regularization terms, where only the relevant terms are present for each sample. During QA training, only head-specific losses are optimized; pretraining losses are inactive due to encoder freezing.

\section{Hyperparameter Tuning}
\label{sec:hparam}

We tune hyperparameters on the validation split of each dataset.
Our training has two stages.
In Stage~1 we train the hyperbolic patient encoder with the objective
$\mathcal{L}=\mathcal{L}_{\mathrm{diag}}+\lambda \mathcal{L}_{\mathrm{hier}}$,
where $\mathcal{L}_{\mathrm{hier}}=\mathcal{L}_{\mathrm{rad}}+\mu \mathcal{L}_{\mathrm{rel}}$.
In Stage~2 we freeze the patient encoder and train the QA heads for each answer type. Best-found hyperparameter values are \underline{underlined} in each setting.

\paragraph{Stage~1: Patient Encoder Pretraining}
We select hyperparameters by minimizing the validation value of $\mathcal{L}$.
We tune:
(i) the number of Lorentz Transformer encoder layers $L$;
(ii) the hierarchy loss weight $\lambda$;
(iii) the relative-term weight $\mu$ in $\mathcal{L}_{\mathrm{hier}}$;
(iv) the margin parameters $\alpha$ and $\beta$ used in $\mathcal{L}_{\mathrm{rel}}$ and $\mathcal{L}_{\mathrm{rad}}$; (v) the learning rate for pretraining, and (vi) the hidden dimension $d$.
We run a grid search over the following sets:
$L\in\{{\underline{3}},5,8\},\lambda\in\{0,0.1,\underline{0.5},1.0,2.0\}$,
$\mu\in\{0.25,\underline{0.5},1.0\}$,
$\alpha\in\{0.1,\underline{0.2},0.5,1.0\}$,
$\beta\in\{0.1,\underline{0.2},0.5,1.0\},\eta\in\{10^{-5},3\cdot 10^{-5},\underline{10^{-4}},3\cdot 10^{-4}\},d\in\{130,260,\underline{390},520\}$.
After selection, we train the patient encoder once on the union of the training and validation sets for the same number of epochs, and we keep the final checkpoint for QA training.

\paragraph{Stage~2: Question Answering Heads}
Stage~2 uses the frozen patient encoder and a frozen text encoder, and trains only the QA-specific modules.
We tune hyperparameters by maximizing validation exact-match accuracy on each QA dataset.
We tune:
(i) the cross-attention distance scale $\gamma$ in
$\alpha_t \propto \exp(-\gamma\, d_H(z_q,z_t))$;
(ii) the number of top-$k$ attended visits used for the optional code-level attention;
and (iii) the learning rate for QA-head training.
We search $\gamma\in\{0.5,\underline{1},2,5\}$, $k\in\{1,2,\underline{4},8\}$, and $\eta_{\mathrm{QA}}\in\{10^{-5},\underline{3\cdot 10^{-5}},10^{-4}\}$.
For all other settings (batch size, optimizer type, and training epochs) we keep the same values across runs to isolate the effect of the tuned parameters.

\paragraph{Random seeds}
For the final reported numbers, we rerun training with five random seeds $(42,24,33,55,67)$ and report the mean and standard deviation.

\section{Related Work}
\label{sec:a_related_work}
\subsection{EHR Question Answering}
Prior work on EHR question answering (EHR-QA) spans unstructured clinical notes, structured patient records, and multimodal combinations. For note-centric QA, emrQA~\cite{pampari_emrqa_2018} constructs question-answer pairs from clinical annotations, while EHRNoteQA~\cite{kweon_ehrnoteqa_2024} targets patient-specific, multi-note reasoning grounded in real clinical queries. Structured QA is studied in emrKBQA~\cite{raghavan_emrkbqa_2021}, which maps questions to executable logical forms over EHR knowledge bases, as well as text-to-SQL benchmarks such as EHRSQL~\cite{lee_ehrsql_2023}. Multimodal datasets integrate notes and tables, including DrugEHRQA~\cite{bardhan_drugehrqa_2022} and EHRXQA~\cite{bae_ehrxqa_2023}. In parallel, retrieval-focused benchmarks and models—such as CliniQ~\cite{zhao_cliniq_2025}, DR.EHR~\cite{zhao_drehr_2025}, and retrieval-augmented methods like RGAR~\cite{liang_rgar_2025}—demonstrate that effective EHR-QA critically relies on accurate patient-specific evidence retrieval.

\subsection{Hyperbolic Neural Networks}
Transformer-style models in hyperbolic geometry largely build on the idea that hierarchical or power-law structure can be represented more naturally in negatively curved spaces~\cite{ravasz_hierarchical_2002, nickel_poincare_2017, yang_hyperbolic_2024}. Early work, such as Hyperbolic Attention Networks~\cite{gulcehre_hyperbolic_2018}, introduced hyperbolic variants of the attention mechanism and demonstrated how Transformer-like attention can be reformulated beyond Euclidean dot products. Subsequent efforts involved moving from hyperbolic attention inside an otherwise Euclidean Transformer toward more complete architectures. Hypformer~\cite{yang_hypformer_2025} proposes a Transformer defined end-to-end in the Lorentz model and further develops linear-time hyperbolic self-attention for scalability, enabling billion-scale graph processing.
More recent advances include HyLiFormer~\cite{li_hyliformer_2025}, which introduces hyperbolic linear attention for efficient hierarchical sequence modeling, and HELM~\cite{he_helm_2025}, which trains fully hyperbolic LLMs using a mixture of curvature experts and hyperbolic multi-head latent attention to align geometric representations with semantic hierarchies. Parameter-efficient adaptation methods 
such as HypLoRA~\cite{yang_hyperbolic_2024} enable hyperbolic fine-tuning of pre-trained models with up to 13\% improvement on mathematical reasoning tasks. Along this line, HyperGuide~\cite{liu2026hyperguide} exploits hyperbolic guidance to make multi-step reasoning in large language models more efficient. Beyond Transformer architectures, hyperbolic geometry has been integrated with state-space models for efficient sequence modeling: Hierarchical Mamba (HiM)~\cite{patil_hierarchical_2025} combines Mamba's linear-time complexity with learnable hyperbolic curvature for hierarchical reasoning, while HMamba~\cite{zhang_hmamba_2025} applies hyperbolic geometry to sequential recommendation with curvature-normalized discretization. Recent surveys have begun to categorize emerging hyperbolic deep learning architectures~\cite{peng_hyperbolic_2022, he_hyperbolic_2025, patil_hyperbolic_2025} across domains, tasks, and hyperbolic implementation approaches. However, despite these advances, the application of hyperbolic geometry to clinical question answering over structured EHR data remains largely unexplored.

\section{Geometry Hypothesis Validation}
\label{sec:a_math}
We justify our hypothesis about the EHR sequence using a Lorentzian hyperbolic space as the representation space for ICD-10-CM codes in our patient model by proposing the following theorem:

\begin{proposition}[Hyperbolic suitability of the ICD-10-CM hierarchy]
\label{prop:icd_lorentz}
Let $\mathcal{C}$ be the set of ICD-10-CM diagnosis codes used in our
study, and let $d_T$ be the tree metric induced by the official
parent--child hierarchy (each non-root code has a unique parent by
truncating its code string to a more general prefix). Consider the
$d$-dimensional Lorentzian hyperbolic space
\[
\mathbb{H}^d_L
\;=\;
\bigl\{ x \in \mathbb{R}^{d+1} : \langle x,x\rangle_L = -1,\; x_0 > 0 \bigr\},
\]
where $\langle \cdot,\cdot\rangle_L$ is the Minkowski bilinear form,
equipped with the induced hyperbolic distance $d_{\mathbb{H}}$.
Then:
\begin{enumerate}
  \item The metric space $(\mathcal{C},d_T)$ is $0$-hyperbolic in the
        sense of Gromov (i.e., a metric tree).
  \item For any $\varepsilon > 0$ there exists a dimension $d \ge 2$
        and an embedding $\varphi : \mathcal{C} \to \mathbb{H}^d_L$
        such that for all $u,v \in \mathcal{C}$,
\begin{equation}
\label{eq:quasi-isometry}
\begin{aligned}
(1-\varepsilon)\, d_T(u,v)
&\le d_{\mathbb{H}}\bigl(\varphi(u),\varphi(v)\bigr) \\
&\le (1+\varepsilon)\, d_T(u,v)
\end{aligned}
\end{equation}

\end{enumerate}
In particular, the ICD-10-CM hierarchy admits a low-distortion
embedding into the Lorentz model of hyperbolic space, so hyperbolic
distances between code embeddings can faithfully reflect their
hierarchical separation.
\end{proposition}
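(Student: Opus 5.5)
For part~1, I would argue directly from the combinatorics of the ICD trie. Each non-root code has a unique parent obtained by truncating its string, and I would add a single virtual root above the chapters. This makes the Hasse diagram of the hierarchy a finite rooted tree $T$ with vertex set $\mathcal{C}$ (plus the root), and $d_T$ is its shortest-path metric. Given four points $x,y,z,w$, the geodesics between them span a subtree with at most two branch points. A case check on how the four leaves attach then gives the four-point condition with $\delta=0$:
\[
d_T(x,y)+d_T(z,w)\le \max\{d_T(x,z)+d_T(y,w),\,d_T(x,w)+d_T(y,z)\}.
\]
This proves $0$-hyperbolicity.

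For part~2 I would follow Sarkar's combinatorial construction, but carried out directly in the curvature $-1$ hyperboloid $\mathbb{H}^d_L$ with no rescaling of $d_T$. I would place the root at the origin $\mathbf{o}$ and embed $T$ top-down. Suppose a node $v$ has been placed at $\varphi(v)$ and is reached from its parent along the tangent direction $-u_0$. I would choose unit tangent vectors $u_1,\dots,u_k\in T_{\varphi(v)}\mathbb{H}^d_L$ for its $k$ children, and set
\[
\varphi(c_i)=\exp_{\varphi(v)}(\ell\,u_i),
\]
where $\ell$ is the edge length of $d_T$. I would choose $d\ge \max\deg(T)$ so that the directions $u_0,\dots,u_k$ can be taken as vertices of a regular simplex in the tangent space, with pairwise angle $\theta_k=\arccos(-1/k)$. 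Transporting this configuration along geodesics keeps every parent edge at angle $\theta_k$ from every sibling edge.

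The metric estimate is then the standard one. For $u,v\in\mathcal{C}$, let $w$ be their lowest common ancestor. The image of the tree path from $u$ to $v$ is a piecewise-geodesic curve whose turning angles at the vertices are all at least $\theta:=\min_k\theta_k>\pi/2$. Its total length is exactly $\ell$ times the number of edges, that is, $d_T(u,v)$ when $\ell$ is the unit edge length. This gives the upper bound $d_{\mathbb{H}}\le d_T$ at once. For the lower bound I would use the hyperbolic law of cosines at each vertex:
\[
\cosh c=\cosh a\cosh b-\sinh a\sinh b\cos\theta .
\]
This yields $c\ge a+b-\kappa(\theta)$ with $\kappa(\theta)=\log\frac{2}{1-\cos\theta}+o(1)$. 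Inducting along the path, and using that these curves are uniform quasi-geodesics in a Gromov-hyperbolic space, I would bound the total loss by at most a constant $C(\theta)$ per vertex. That gives
\[
d_{\mathbb{H}}\ge d_T-C(\theta)\,\#\{\text{vertices on the path}\}.
\]

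This last bound is where I expect the real obstacle. It gives $(1-\varepsilon)d_T\le d_{\mathbb{H}}$ only if $C(\theta)/\ell\le\varepsilon$, that is, only if the edge length satisfies $\ell\gtrsim \varepsilon^{-1}\log(\max\deg T)$, measured against curvature $-1$. I would first try to beat this by exploiting the freedom in $d$. However, for the root with its $22$ chapter children and unit edges, the $22$ chapter images all lie on the unit sphere about $\varphi(\text{root})$. Their tangent directions $u_i$ satisfy $\bigl\|\sum_i u_i\bigr\|^2\ge 0$, which forces some pair to have $\cos\angle(u_i,u_j)\ge -1/21$. For that pair
\[
\cosh d_{\mathbb{H}}\le\cosh^2 1+\tfrac{1}{21}\sinh^2 1 ,
\]
so $d_{\mathbb{H}}$ is bounded strictly below $2=d_T$, uniformly in $d$ and in any choice of embedding.

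So the plan has to fail for small $\varepsilon$ if $d_T$ assigns unit length to each ICD edge. The write-up would therefore have two parts. First, it would prove part~2 exactly as stated, via the construction above, under the reading that the edges of the official hierarchy carry length at least $\varepsilon^{-1}C\log\max\deg T$. Second, it would record the star computation as showing that this hypothesis on $d_T$ cannot be removed in the fixed curvature $-1$ model.
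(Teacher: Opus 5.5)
Your analysis is correct. The obstruction you hit is a defect of the statement, not of your plan, and it sits exactly at the step where the paper's own proof goes wrong. The paper handles item~2 by citing Sarkar's construction as giving $\psi:\mathcal{C}\to\mathbb{H}^2$ that satisfies the two-sided bound of item~2 verbatim. It then includes $\mathbb{H}^2$ isometrically in $\mathbb{H}^d_L$. Sarkar's theorem, however, bounds only the scale-invariant distortion. What the construction actually delivers is
\[
(1-\varepsilon)\,\tau\, d_T(u,v)\le d_{\mathbb{H}}\bigl(\psi(u),\psi(v)\bigr)\le(1+\varepsilon)\,\tau\, d_T(u,v)
\]
for a scale $\tau$ of order $\varepsilon^{-1}\log\max\deg T$. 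That is your edge-length hypothesis in another form. Your star computation shows that $\tau=1$ fails for small $\varepsilon$ in every dimension. So item~1 stands. Item~2 is false as written and must carry the factor $\tau$, or equivalently be stated for curvature $-1/\tau^2$. The closing ``low-distortion'' sentence holds only in the scale-invariant sense. Proving the scaled version and recording the counterexample, as you plan, is the right write-up.

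Two small repairs make your counterexample airtight.
\begin{itemize}
\item For an arbitrary admissible $\varphi$, the neighbors of the center lie at distance between $1-\varepsilon$ and $1+\varepsilon$, not on the unit sphere. Since $\cosh a\cosh b+\tfrac{1}{k-1}\sinh a\sinh b$ is increasing in $a$ and $b$, your averaging argument still gives a pair with $\cosh d_{\mathbb{H}}\le\cosh^2(1+\varepsilon)+\tfrac{1}{k-1}\sinh^2(1+\varepsilon)$. This is below $\cosh\bigl(2(1-\varepsilon)\bigr)$ for all small $\varepsilon$ once $k\ge 3$.
\item Your root is virtual, so $\varphi(\mathrm{root})$ is not supplied by the statement. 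Center the star instead at a genuine code with at least three tree-neighbors, for instance any category with two or more subcategories, whose parent block is the third neighbor. Such a code lies in $\mathcal{C}$ under the paper's own reading of $\mathcal{C}$ as the vertex set of $T$.
\end{itemize}

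On the routes themselves, item~1 first. You verify the four-point condition directly, where the paper quotes the characterization of geodesic metric trees. Your version is more self-contained. It also avoids a small looseness in the paper: $(\mathcal{C},d_T)$ is a discrete set, not a geodesic space, though this is harmless because the four-point condition passes to subsets.

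In the corrected item~2 the routes genuinely differ. The paper works in $\mathbb{H}^2$, where a vertex of degree $k$ forces some angle of size at most about $2\pi/k$. That is where the $\log\max\deg T$ in the scale comes from. Your regular-simplex directions in dimension at least $\max\deg T$ keep every angle above $\pi/2$. Your per-vertex loss $C(\theta)$ is therefore uniform in the degree (at most about $\log 2$ for long edges), so an edge length of order $1/\varepsilon$ already suffices. The $\log\max\deg T$ factor in your hypothesis comes from the planar version and is not needed for yours. You trade dimension for a degree-free scale, while the paper's route keeps $d=2$ at the cost of that logarithm.
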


The geometric justification for this theorem lies in the exponential growth of volume in hyperbolic space, which naturally accommodates the exponential expansion of nodes in a hierarchy (chapters $\to$ blocks $\to$ categories).
 
\begin{proof}[Proof]
(1) The ICD-10-CM tabular list organizes diagnosis codes into a rooted
hierarchy (chapters $\to$ blocks $\to$ categories $\to$ subcategories),
with each non-root code having a unique parent obtained by truncating its prefix. Taking $\mathcal{C}$ as vertices and connecting each code
to its unique parent yields a connected, acyclic, rooted graph, hence
a simplicial tree $T$. Endowing $T$ with the path metric $d_T$ makes
$(\mathcal{C},d_T)$ a geodesic metric tree. By the standard
characterization of geodesic metric trees as precisely the
$0$-hyperbolic geodesic spaces
~\cite{gromov_hyperbolic_1987, bridson_basic_1999},
$(\mathcal{C},d_T)$ is $0$-hyperbolic.

(2) Results on embeddings of tree metrics into hyperbolic space show
that any finite tree $(\mathcal{C},d_T)$ admits, for every
$\varepsilon > 0$, a $(1+\varepsilon)$-bilipschitz embedding into the
hyperbolic plane $\mathbb{H}^2$; see, for example, Sarkar's
construction of low-distortion Delaunay embeddings of trees in the
hyperbolic plane~\cite{van_kreveld_low_2012}. Concretely, there exists
$\psi : \mathcal{C} \to \mathbb{H}^2$ such that for all $u,v \in
\mathcal{C}$, quasi-isometry \eqref{eq:quasi-isometry} holds.

The Lorentz hyperboloid model $\mathbb{H}^d_L$ is isometric to other
standard models of hyperbolic space (such as the Poincar\'e ball and
half-space models) via smooth bijections that preserve geodesic
distance \cite{bridson_basic_1999, nickel_poincare_2017, ganea_hyperbolic_2018}.
Extending $\psi$ to dimension $d \ge 2$ and composing with such an
isometry yields an embedding $\varphi : \mathcal{C} \to \mathbb{H}^d_L$
satisfying the same bilipschitz bounds. This establishes item (2) and
completes the proof.
\end{proof}

\end{document}